\newtheorem{theorem}{Theorem}
\newtheorem{lemma}{Lemma}
\newtheorem{definition}{Definition}
\title{Knowledge Injection into Dialogue Generation\\via Language Models}
\author{Yi-Lin Tuan$^1$, Wei Wei$^2$, William Yang Wang$^1$ \\
  $^1$University of California, Santa Barbara, USA\\
  $^2$Google Research, Mountain View, USA\\
  \texttt{\{ytuan, william\}@cs.ucsb.edu, wewei@google.com} \\}
\date{}
\begin{document}
\maketitle
\begin{abstract}
Dialogue generation has been successfully learned from scratch by neural networks, but tends to produce the same general response, e.g., ``what are you talking about?’’, in many conversations. To reduce this homogeneity, external knowledge such as the speaker's profile and domain knowledge is applied as an additional condition to diversify a model’s output. The required knowledge to develop an effective conversation, however, is not always available, which is different from prior work’s assumption that a model always has acquired sufficient knowledge before chatting. This problem can be detrimental when applying a dialogue model like this chatting online with unconstrained people and topics, because the model does not have the needed knowledge. To address this problem, we propose InjK, which is a two-stage approach to \underline{Inj}ect \underline{K}nowledge into a dialogue generation model. First, we train a large-scale language model and query it as textual knowledge. Second, we frame a dialogue generation model to sequentially generate textual knowledge and a corresponding response. Empirically, when a dialogue generation model can only access limited knowledge, our method outperforms prior work by producing more coherent and informative responses.
\end{abstract}

\section{Introduction}

Research in dialogue generation aims to develop machines that can vividly converse with humans. One predominant method to solve this task is learning a neural network from large-scale real conversations~\cite{vinyals2015neural}. However, this approach creates the problem that a generated response tends to be general (e.g., ``I don’t know.’’ and ``What are you talking about?’’ are responses acceptable for many cases)~\cite{li2016diversity}. One reason is that there are diverse valid responses to the same dialogue, a method without proper design can only learn one vague response instead of memorizing multiple possibilities.

To generate more informative responses, recent works have collected data with external knowledge for reference, such as Persona-Chat~\cite{zhang2018personalizing, dinan2020second}. It provides textual profiles of the speakers involving in dialogues, e.g., ``Speaker1’s favorite sport is ultimate frisbee.’’ These profiles are often taken as the extra conditions for both training and testing the neural models. In this way, we can prevent vague responses but add specified information to them.

\begin{figure}[t!]
    \centering
    \includegraphics[width=.95\linewidth]{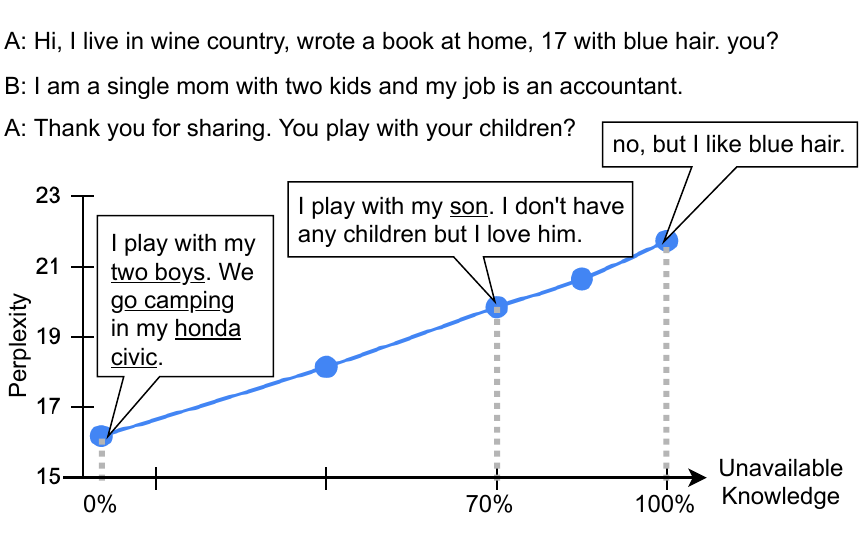}
    \caption{An example of a knowledge-grounded dialogue generation model test with insufficient knowledge, where underlines indicate knowledge related words. The model generates lower quality responses and rises perplexity when less knowledge is available.}
    \label{fig:intro}
\end{figure}

However, in a real conversation, we do not always have sufficient knowledge of a discussed topic as those artificially designed datasets. For instance, we do not know a stranger’s background in advance, nor do we know a song’s production process for it is often not publicly available.
Given that the accessible knowledge is actually limited, we conducted an experiment simulating this insufficiency in Figure~\ref{fig:intro}. The generated response is deviated from the conversation, resulting in much higher perplexity.
This inspires us to tackle the problem of insufficient knowledge in dialogue generation.

To solve knowledge insufficiency, we propose InjK, a unified method that does not require external knowledge in datasets. This approach formulates dialogue generation as a two-stage process. First, given an ongoing conversation, we conjecture possible knowledge in a learned domain that can be used for this dialogue. Second, the conjectured knowledge is used with the conversation to predict the response. More technically, the approach disentangles semantics information from the input message, maps it to a knowledge domain, then takes the mapped one as the knowledge for response generation. This process can model the stochastic causality from unobserved knowledge to response, thus improving the contained information in responses.

We analyze the performance of state-of-the-art and our proposed approach on this newly proposed task by how informative the generated responses are when limited knowledge for reference. We show that prior methods will collapse when not enough amounts of knowledge can be referred to, while InjK performs well less regarding to how much knowledge is given. The results demonstrate that considering the knowledge inefficiency in training methods for dialogue generation can improve user experience in real-scenarios, in which little knowledge can be obtained beforehand.

Our contributions are:
\begin{itemize}
    \item Discussion and simulation of knowledge insufficiency in dialogue generation.
    \item Proposal of a stochastic causal model to learn latent knowledge variable.
    \item Evaluation of knowledge insufficiency in inference time, showing that our proposed model achieves overall the best on two datasets.
\end{itemize}

\section{Related Work}
After the success of learning neural conversation models~\cite{vinyals2015neural,serban2016building}, people have explored to generate more informative responses.

Early approaches focused on learning more diverse responses by introducing latent variables.
\cite{li2016diversity, shao2017generating} re-ranked possible responses according to the mutual information between predictions and inputs.
\cite{bowman-etal-2016-generating, serban2017hierarchical, zhao2017learning, gao2019discrete} matched the posteriors and priors of dialogue generation as variational autoencoders~\cite{kingma2014auto} to reduce lost information in context.
\cite{li2017adversarial, xu2017neural, zhang2018generating, xu2018diversity, tuan2019improving} utilized adversarial learning~\cite{goodfellow2014generative} that was able to learn less vague distribution, thus providing more diverse results.
These approaches target to uncover underlying knowledge in conversations, but not injecting new information.

Beyond modifying models, recent predominant approaches were annotating conversations with speakers' profiles~\cite{li2016persona, zhang2018personalizing, shum2019sketch, zheng2019pre, dinan2020second} and topic-related knowledge~\cite{ghazvininejad2018knowledge, dinan2018wizard, zhou2018commonsense, moon2019opendialkg, raghu2019disentangling, tuan2019dykgchat}. By increasing information in context, the responses were more informative than only revising models. Since this advantage needed intensive context-knowledge-response pairs in data, \cite{Zhao2020Low-Resource, li2020zero} devised methods to reduce these required annotations in training set.
However, the lack of knowledge in inference time is still an issue.
This neglect can cause severe distribution shift problem when releasing those dialogue models for applications.


\begin{figure}[t]
    \centering
    \includegraphics[width=.65\linewidth]{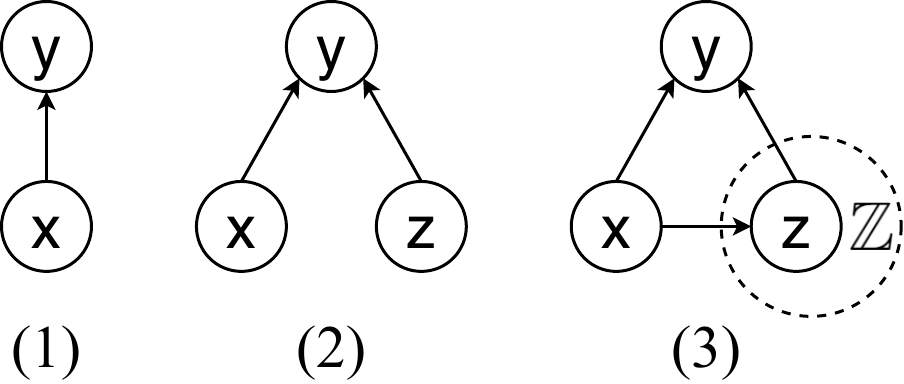}
    \caption{The graphical models of (1) standard dialogue generation (2) knowledge-grounded dialogue generation (3) our proposed approach, where $x$ represents the dialogue history, $y$ represents the response, $z$ represents the paired knowledge in (2), which is not always accessible, or the inferred knowledge in (3), and $\mathbb{Z}$ represents the knowledge domain.}
    \label{fig:graphical}
\end{figure}

\section{Preliminaries}

\subsection{Dialogue Generation}
Given a dialogue history $x$, which consists of multiple previous turns in a conversation, the task \emph{dialogue generation} aims to predict the next turn $y$, which is a sequence of tokens. When modeling dialogue generation by neural networks, a standard method maximizes the log-likelihood of generating $y$ by the loss function $L=\log P(y|x)=\Sigma_t \log P(y_t|x,y_{<t})$~\cite{vinyals2015neural}.

Some dialogue generation models assume that an external knowledge base is available. For each conversation pair $(x,y)$, a paired knowledge sample is either retrieved from the knowledge base or annotated. Here we focus on examples of textual knowledge and denote a sample knowledge sentence as $z$.\footnote{Other forms of knowledge, e.g., knowledge graph, is left for future work and can also be framed as sentences using templates.}
Given this paired $z$, knowledge-grounded dialogue models are generally optimized by the loss function $L=\log P(y|x,z)$~\cite{ghazvininejad2018knowledge} that gains benefits from $z$ to restrict the variance of $y$.

Their graphical models are plotted in Figure~\ref{fig:graphical}(1)(2), where the standard model only learns the causal relation from $x$ to $y$ but the knowledge-grounded model attributes $y$ to $x$ and $z$.

\subsection{Knowledge Insufficiency}
In a real conversation, we may not have the required knowledge to develop an engaging response. For example, if there is a conversation about ``Game of Thrones'', but we have not seen ``Game of Thrones'' and do not know what it is, we are less likely to engage in this conversation.
We call this the \emph{insufficient knowledge} problem. 
With this lack, a person might just copy a familiar term in the conversation and said ``What is the board game?'' (however, ``Game of Thrones'' is a TV series), thus deviating from the conversation.

Conversations with insufficient knowledge are common in natural discussion, where an interlocutor might not have adequate knowledge or experience regarding the topic at hand and need to produce exchangeable information relying on our cognitive system~\cite{pask1976conversation, hammersley2003conversation, turnbull2003language}.
We describe this scenario in Figure~\ref{fig:graphical}(3), where $\mathbb{Z}$ imitates the cognitive system to produce information $z$ with the $x$'s assistance, while the model attributes $y$ to $x$ and the guessed $z$ without access to external annotations.

\begin{figure}[t]
    \centering
    \includegraphics[width=.9\linewidth]{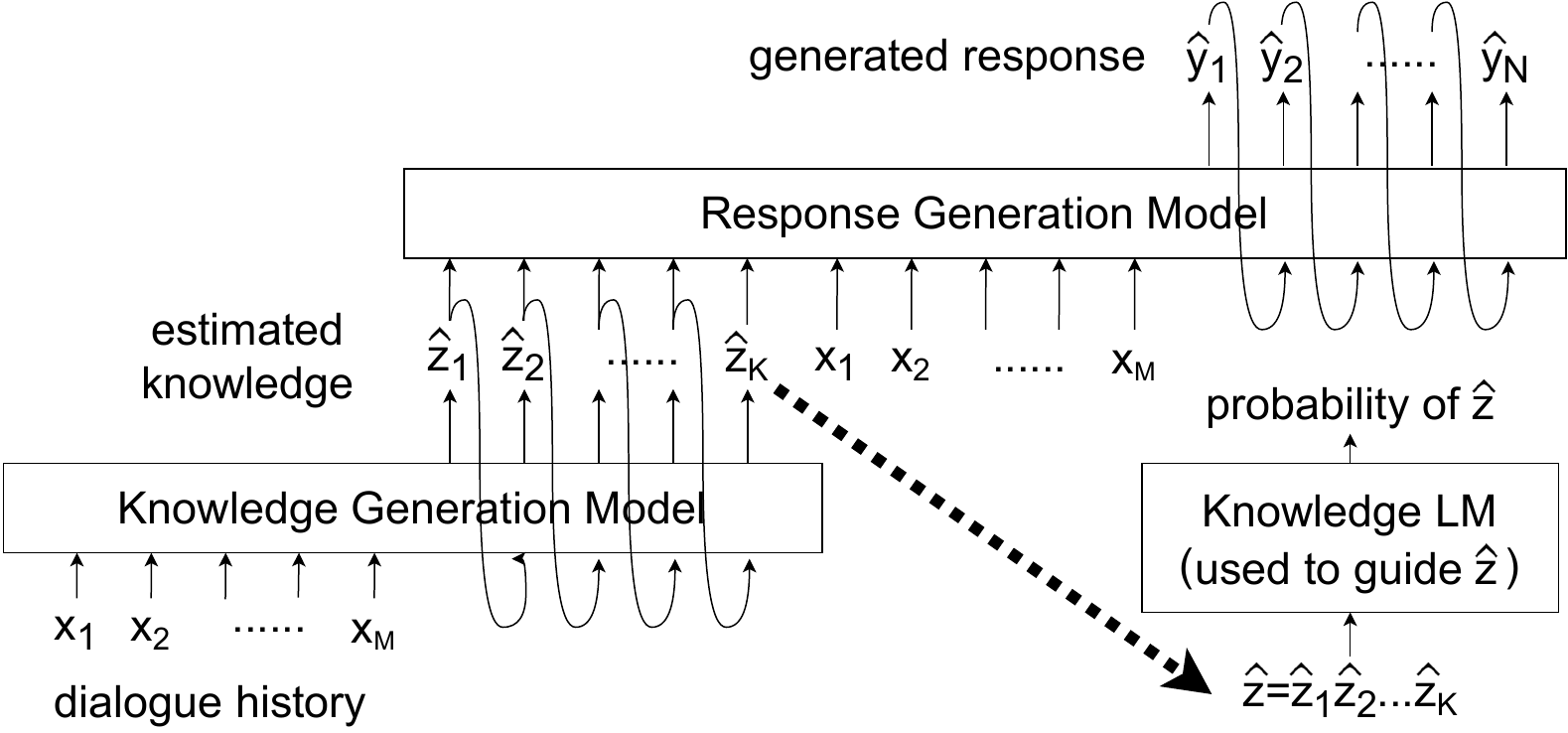}
    \caption{The proposed method is composed of three modules. First the dialogue history $x=x_1x_2...x_M$ is fed into knowledge generate model. The model then infers an estimated knowledge $\hat{z}=\hat{z}_1\hat{z}_2...\hat{z}_K$. Finally, $\hat{z}$ and $x$ are concatenated and fed into the response generation model to produce the response $\hat{y}=\hat{y}_1\hat{y}_2...\hat{y}_N$. During training, the estimated knowledge $\hat{z}$ is fed into the knowledge LM to maximize its probability.}
    \label{fig:modules}
\end{figure}

\section{Knowledge-injected Dialogue Model}
\label{sec:method}

We propose a knowledge-injected dialogue model (InjK) that instead of ``grounding on'' external knowledge~\cite{ghazvininejad2018knowledge, zhou2018commonsense, Zhao2020Low-Resource, li2020zero} ``injects'' knowledge into a model, so that this model can be test with limited accessible knowledge.

\subsection{Motivation}
We observe that when a talker is having a conversation, she carries assumptions about the world that is somewhat independent from what she heard in this conversation. The talker then develops a response by combining her received messages in this dialogue and her realization of the world. For example in a conversation, someone said ``I once road on The Royal Blue train from New York to D.C'', the talker may have her ``assumed knowledge'' (may not be the truth) that ``People can see beautiful views from the windows of a train'', therefore she said ``Oh that sounds really nice. I bet there was a lot of scenery and blue skies.''

In this illustrative example, we can summarize the relationships among ``assumed knowledge'' $z$, dialogue history $x$, and response $y$ as the following three points that were said to build causality~\cite{hlavavckova2007causality} from $x$ to $y$ and $z$ to $y$.
\begin{itemize}
    \item $y$ follows after $x$ and $z$ (time-ordering)
    \item $y$ relates to $x$ and $z$ (covariation)
    \item after removing $x$, $y$ still relates to $z$ ($x$ and $z$ are confounding variables to $y$)
\end{itemize}

Motivated by this observation, we cast dialogue generation as a stochastic causal model~\cite{pearl1987evidential} and take a talker's assumed knowledge as an unobserved variable $z$ that meets the above three conditions. We define the causality among $x$,$y$,$z$ in equation~\ref{eq:causal}, where $\mathbb{Z}$ is an abstract set of knowledge.
\begin{equation}\label{eq:causal}
    \begin{split}
        P(y|x) & = \mathbb{E}_{z\in\mathbb{Z}}P(y|x,z)P(z|x)\\
    \end{split}
\end{equation}

\subsection{Model Details}
The framework of our proposed InjK is depicted in Figure~\ref{fig:modules} and has three modules: a knowledge language model (LM) $\theta$, a knowledge generation model $\sigma$, and a response generation model $\phi$, where $\theta$, $\sigma$, $\phi$ are their parameters.
We denote a textual knowledge as $z=\{z_t\}_1^{|z|}\in \mathbb{Z}$, where each $z_t$ is a token and $|z|$ is the length of this knowledge.

\paragraph{Test Phase.}
We first feed a sequence $x$ into the knowledge generation model $\sigma$ and sample an estimated knowledge sequence $\hat{z}$ from a probability distribution $P_\sigma(\hat{z}|x)$ that is parameterized by $\sigma$. Next, we take $x$ and $\hat{z}$ as the inputs to predict a response $\hat{y}$ from distribution $P_\phi(\hat{y}|x,\hat{z})$ using the response generation model.
Note that we predict $\hat{z}$ and $\hat{y}$ token-by-token until the eos-of-sentence symbol is generated.
We formulate this generation process in test phase as equation~\ref{eq:test_process}, where the symbol $\sim$ means {\it sampled from} a distribution.
\begin{equation}\label{eq:test_process}
    \begin{split}
        \hat{z} & \sim \Pi_t P_\sigma(\hat{z}_t|x, \hat{z}_{<t})\\
        \hat{y} & \sim \Pi_t P_\phi(\hat{y}_t|x,\hat{z}, \hat{y}_{<t})\\
    \end{split}
\end{equation}

\paragraph{Training Phase.}
Similar to Test Phase, we sample $\hat{z}$ from $P_\sigma(\hat{z}|x)$, but use $x$, $\hat{z}$, and the previous tokens in ground-truth response $y_{<t}$ to predict token $y_t$ on $t$-th step. We then refine equation~\ref{eq:causal} and define the general objective as
\begin{equation}\label{eq:general-obj}
    \begin{split}
        L(\phi,\sigma) & = \mathbb{E}_{z\in\mathbb{Z}}P_\sigma(z|x) \Pi_t P_\phi(y_t|x,z,y_{<t})\\
    \end{split}
\end{equation}
Next, we relax the constraint $z\in\mathbb{Z}$ by regularizing a deduced $z$ lying in $P_\theta(\mathbb{Z})$, which is paramerized by a knowledge LM $\theta$ that simulates the distribution of possible knowledge.
Particularly, we train a knowledge LM by maximizing log-likelihood on a collection of textual knowledge that depends on used dataset.
We then adopt the mode-seeking direction of Kullback-Leibler divergence(KLD)~\cite{kullback1951information,agarwal2019learning} and formulate the loss functions as

\paragraph{Generation Models:}
\begin{equation}\label{eq:loss}
    \begin{split}
        L(\phi,\sigma) = & -\log P_\phi(y|x,z) P_\sigma(z|x) \\
        & + D_{KL}(P_\sigma(z|x)||P_\theta(z))\\
    \end{split}
\end{equation}
{\bf Knowledge LM:}
\begin{equation}\label{eq:klm-loss}
    \begin{split}
        L(\theta) & = -\Sigma_{t=1}^{|z|} \log P_\theta(z_t|z_{<t})
    \end{split}
\end{equation}

Detailed derivations are presented in Supplementary Material.

\begin{algorithm}[t]
  \caption{\textsc{InjK Optimization}}
  \label{alg}
  \KwData{dialogues $D$, textual knowledge $\mathbb{Z}$}
  \KwIn{initialize knowledge LM ($\theta$), knowledge generation model $\sigma$, response generation model $\phi$}
  \KwIn{hyperparamters $\alpha$, $\beta$, $\gamma$}
  \For{each pretraining iteration}
  {
     sample a mini-batch $z$ from $\mathbb{Z}$\\
     compute $\nabla_\theta$ as Eq.~\ref{eq:grads}\\
     update $\theta$ with learning rate $r_\theta$ by\\
     \Indp{
       $\theta \leftarrow \phi + r \nabla_\theta$\\
     }
  }
  \For{each training iteration}
  {
     sample a mini-batch $(x,y)$ from $D$\\
     sample $\hat{z}$ from $P_\sigma(z|x)$\\
     compute $\nabla_\phi$ as Eq.~\ref{eq:grads}\\
     compute $\nabla_\sigma$ as Eq.~\ref{eq:sigma-grad}\\
     update $\phi$ and $\sigma$ with learning rate $r$ by\\
     \Indp{
       $\phi \leftarrow \phi + r \nabla_\phi$\\
       $\sigma \leftarrow \sigma + r \nabla_\sigma$\\
     }
  }
\end{algorithm}

\subsection{Optimization Challenges}
We propose two further steps to avoid the intractability of (a) $D_{KL}$ and (b) $P_\sigma(z|x)$ in equation~\ref{eq:loss}.

\paragraph{$D_{KL}$}
When computing the KLD term in equation~\ref{eq:loss}, we have to multiply the probabilities of each token in $z$, which will grow exponentially w.r.t $|z|$ and make this KLD term intractable.
In practice, we substitute the KLD with an upper bound in equation~\ref{eq:kld-bound} and prove this bound in Supplementary Material.
\begin{equation}\label{eq:kld-bound}
    \begin{split}
        D_{KL}(P_\sigma(z|x)& ||P_\theta(z)) \\ \leq \Sigma_{t} & D_{KL}(P_\sigma(z_t|x,z_{<t})||P_\theta(z_t|z_{<t}))
    \end{split}
\end{equation}

\paragraph{$P_\sigma(z|x)$}
We apply policy gradient~\cite{sutton2018reinforcement, ranzato2015sequence} to prevent from the non-differentiable $z$ in $P_\phi(y|x,z)P_\sigma(z|x)$, since $z$ is a sequence of discrete variables sampled from distribution $P_\sigma(z|x)$.
We cast this problem as reinforcement learning by framing the $P_\phi(y|x,z)$ as the reward after taking a sequence of actions $\{z_t\}_1^{|z|}$ that are selected from the policy $P_\sigma(z|x)$. To reduce the high variance of policy gradient update, we subtract the expected reward of taking action $z_t$ by a baseline $b$. The gradient of $\sigma$ from $P_\phi(y|x,z)$ can thus be formulated as
\begin{equation}\label{eq:pg-loss}
    \begin{split}
        \nabla_\sigma 
            & = \Sigma_{t=1}^{|z|} (P_\phi(y|x,z)-b) \nabla_\sigma \log P_\sigma(z_t|x,z_{<t})\\
    \end{split}
\end{equation}

Overall, we approximate the gradient of $\sigma$ by combining equation~\ref{eq:kld-bound} and equation~\ref{eq:pg-loss} as
\begin{equation}\label{eq:sigma-grad}
   \begin{split}
        \nabla_\sigma & = \\
        & \Sigma_{t=1}^{|z|} [ \beta (P_\phi(y|x,z)-b) \nabla_\sigma \log P_\sigma(z_t|x,z_{<t})\\
        & - \gamma \nabla_\sigma D_{KL}(P_\sigma(z_t|x,z_{<t})||P_\theta(z_t|z_{<t}))]\\
    \end{split}
\end{equation}
while we compute the gradients of $\theta$ and $\phi$ by maximizing log-likelihood as
\begin{equation}\label{eq:grads}
    \begin{split}
        \nabla_\theta & = \Sigma_{t=1}^{|z|} \nabla_\theta \log P_\theta(z_t|z_{<t})\\
        \nabla_\phi & =  \alpha \Sigma_{t=1}^{|y|} \nabla_\phi \log P_\phi(y_t | x, z, y_{<t})\\
    \end{split}
\end{equation}
where $\alpha$, $\beta$, $\gamma$ are coefficients to tune the importance of each term. Empirically, the gradients work well when $\alpha$, $\beta$ are set to 1, and $\gamma\in [0.1,1]$. The parameters $\theta$, $\phi$, $\sigma$ are then updated by gradient ascent.

The learning algorithm is summarized in Algorithm~\ref{alg}.

\begin{figure*}[t]
    \centering
     \begin{subfigure}[b]{.95\linewidth}
         \centering
         \includegraphics[width=\linewidth]{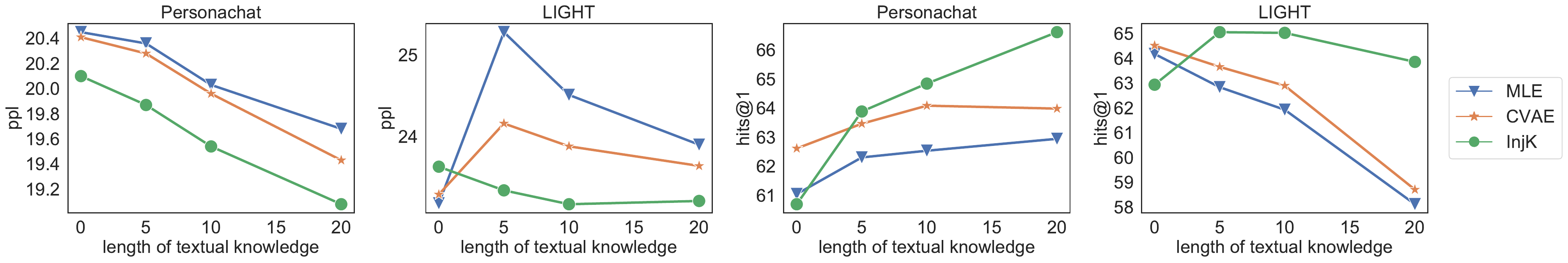}
         \caption{Comparison to models trained without paired knowledge.}
         \label{fig:y equals x}
     \end{subfigure}
     \hfill
     \begin{subfigure}[b]{.95\linewidth}
         \centering
         \includegraphics[width=\linewidth]{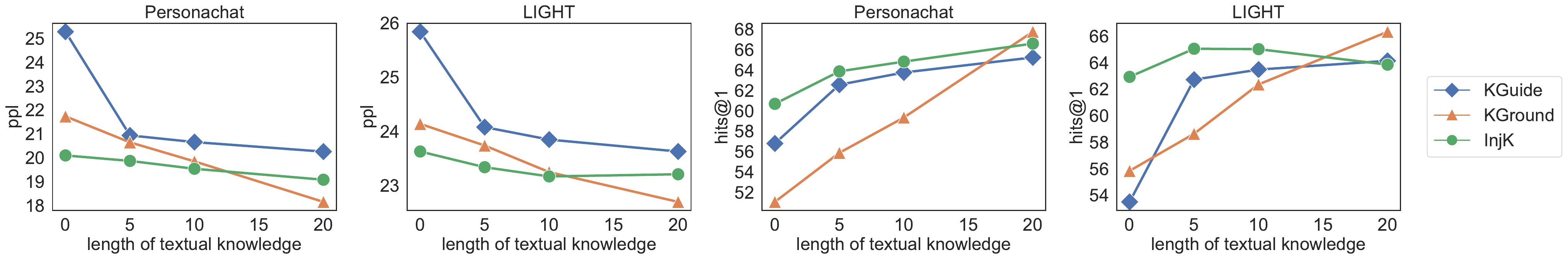}
         \caption{Comparison to models trained with paired knowledge.}
         \label{fig:y equals x}
     \end{subfigure}
\caption{Evaluation of insufficient knowledge (length of textual knowledge) by perplexity (ppl) and hits@1 on Personachat and LIGHT datasets.}
\label{fig:curves}
\end{figure*}

\begin{table*}[t]\small
    \centering
    \begin{tabular}{l|ccccc|ccccc}\toprule[1pt]
        & \multicolumn{5}{c}{Personachat} & \multicolumn{5}{|c}{LIGHT}\\\midrule[0.5pt]
        Model & PPL & Hits@1 & BLEU-1 & BLEU-2 & Average
         & PPL & Hits@1 & BLEU-1 & BLEU-2 & Average\\\midrule[0.5pt]
        MLE & 19.9 & 62.5 & 11.13 & 4.75 & 0.8657 
            & 24.5 & 61.9 & 8.84 & 2.90 & 0.8566\\
        CVAE & 20.0 & 64.1 & 11.19 & 4.87 & 0.8644 
            & 23.9 & 62.9 & 8.97 & 2.89 & 0.8622\\
        KGuide & 20.7 & 63.8 & 11.02 & 4.87 & 0.8646 
            & 23.9 & 63.5 & 9.06 & 2.98 & 0.8639\\
        KGround & 20.0 & 59.4 & 10.61 & 4.89 & 0.8608
            & 23.3 & 62.4 & 8.72 & 2.88 & 0.8641\\
        InjK & \bf 19.5 & \bf 64.9 & \bf 11.38 & \bf 4.92 & \bf 0.8658 
            & \bf 23.2 & \bf 65.0 & \bf 9.42 & \bf 3.12 & \bf 0.8657\\\bottomrule[1pt]
    \end{tabular}
    \caption{Automatic evaluation on Personachat and LIGHT with limited textual knowledge (length=10).} 
    \label{tab:results}
\end{table*}

\section{Experiments}

We evaluate the knowledge insufficiency problem in test phase on dialogues that should be informative but lack knowledge for reference.

\subsection{Datasets}

To simulate usual scenarios that people are unfamiliar with a discussed topic, we design two datasets that respectively does introduction and role-playing game tasks, but take speakers' profiles as the knowledge.

\paragraph{Conversations talking about profiles.}
We formulate Personachat~\cite{zhang2018personalizing, dinan2020second} as two-speaker dialogues with a textual description of whom is to respond. The description is composed of multiple sentences about personalities. When collecting the dialogues, the speakers were asked to \emph{get to know each other} following their assigned profiles. The resulting amounts are 131438 / 7801 / 7504 for train/valid/test sets.

\paragraph{Conversations acting as profiles.}
We frame LIGHT~\cite{urbanek2019learning} as two-speaker dialogues with textual descriptions of characters in an adventure game. When collecting the dialogues, the speaker was asked to act as the assigned profiles to converse with other characters. The resulting amounts are 102109 / 6123 / 12272 for train/valid/test sets.

We simulate the cases that the required knowledge is not accessible by masking the profiles. The masked datasets are then challenging for that we need to learn the underlying knowledge of a response without annotations but infer an engaging response that (a) directly talks about the knowledge  and (b) implicitly acts following the knowledge.

\subsection{Baselines}

We modified state-of-the-art dialogue models for this task as baselines. For fairness, we implemented them based on the same transformer architecture.
{\bf MLE}~\cite{vinyals2015neural}, stands for maximum likelihood estimation, that trains a neural conversation model with only dialogue history as inputs, and then predicts the response. The loss function is $L=-\log P(y|x)$.
{\bf CVAE}~\cite{zhao2017learning, serban2017hierarchical, gao2019discrete}, stands for conditional variational autoencoder, that first predicts a discrete latent variable $z'$, which is trained as the posterior of the dialogue history and the response. Then the method uses $z'$ and dialogue history to predict the response. The loss function is $L=-\log P(y|x,z') + D_{KL} (P(z'|x)||P(z'|x,y))$.
{\bf KGuide}~\cite{zhao2017learning} that trains a model to predict the paired knowledge $\tilde{z}$ and then takes the prediction and dialogue history as inputs to generate the response. The loss function is $L=-\log P(y|{z,x}) - \log P(\tilde{z}|x)$.
{\bf KGround}~\cite{wolf2019transfertransfo} that trains a model by concatenating the paired knowledge $\tilde{z}$ and the dialogue history, and takes them as inputs to generate the response. The loss function is $L=-\log P(y|{\tilde{z},x})$.
We prevent some recent work such as \cite{Zhao2020Low-Resource, li2020zero} from comparison since they require the presence of a large knowledge base in test time, which is not fair to compete in our setup.

\begin{table*}[t]\small
    \centering
    \begin{tabular}{p{0.5\linewidth}|p{0.43\linewidth}}\toprule[1pt]
        \bf Input & \bf Response \\\midrule[0.5pt]
        \makecell[l]{
        A: wow, you need a hobby to get away, like jujitsu or running.\\
        B: i like running, but i have no time.\\
        A: well i sure can understand that. running does relieve stress\\ though\\
        B: it does, i like to wear my favorite color blue when i do run.\\
        A: when you {\bf have the time where do you run}?}
        & \makecell[l]{
        MLE: i {\bf run to the gym} in my spare time.\\
        CVAE: i like to {\bf run in the woods}.\\
        KGuide: {\bf anywhere but the park}.\\
        KGround: the farm i grew up on is a farm, but i like to\\ {\bf run in the fields}.\\
        InjK: i {\bf run to the park near my house}.}\\\bottomrule[1pt]
        
        
        \bf Input & \bf Response\\\midrule[0.5pt]
        \makecell[l]{
        A: i mean that's where i live now how about you\\
        B: big city now. helps with {\bf my career in singing}\\
        A: so {\bf what do you see any music you sing}\\\\\\\\
        }
        & \makecell[l]{
        MLE: i like {\bf rock} music and i sing in front of a bunch\\ of people.\\
        CVAE: {\bf rock, pop, rap, alternative}. i also like to hike.\\
        KGuide: i like to sing and sing on the weekends.\\
        KGround: mostly horror movies. i watch a lot of them.\\
        InjK: i {\bf sing in a church choir}.}\\\bottomrule[1pt]
        
        

    \end{tabular}
    \caption{Examples of generated responses with limited textual knowledge (length=10) of Personchat. Bold texts indicate the related knowledge parts in the inputs and responses.}
    \label{tab:case-study}
\end{table*}

\subsection{Implementation Details}

We implemented the models based on transformer~\cite{vaswani2017attention,liu2018generating} and started the training from the GPT model (110M)~\cite{radford2018improving} pretrained on BooksCorpus~\cite{zhu2015aligning}.\footnote{We implement the code with~\cite{wolf-etal-2020-transformers} and refer to \url{https://github.com/huggingface/transfer-learning-conv-ai}.}
We append to the inputs with \emph{speaker embeddings}, which indicate a token is of whom.
Besides log-likelihood loss, the last hidden state is passed to a fully-connected layer to classify if a response is the ground-truth compared with randomly sampled candidates.
The model is trained on one Titan RTX with a batch size 4, a maximum input length 512, and a maximum output length 20.
The $\alpha$,$\beta$,$\gamma$ in Algorithm~\ref{alg} are set as 1 but $\gamma$ for LIGHT is set as 0.1.
The learning rate is 0.0000625 with linearly decay to 0 in 3 epochs and the optimizer is AdamW~\cite{loshchilov2018decoupled}.

\subsection{Results}
We automatically evaluate the models by perplexity (PPL) of the ground-truth responses and Hits@1 among 20 candidate responses following~\cite{zhang2018personalizing,urbanek2019learning}. PPL is the lower the better while Hits@1 is the higher the better. We also provide BLEU and average embedding cosine similarity~\footnote{\url{https://github.com/Maluuba/nlg-eval}} in Table~\ref{tab:results}, where InjK performs better or comparable to the baselines with limited knowledge available.

In Figure~\ref{fig:curves}, we simulate the knowledge insufficiency problem by testing a dialogue generation model with limited textual knowledge. The x-axis in the plots is specifically the length of a randomly selected sub-sequence in the paired knowledge.

As shown in Figure~\ref{fig:curves}, most PPL curves decline and most Hit@1 curves rise along with the amount of accessible knowledge. This validate our assumption that $z$ is a confounding variable to $y$ in Section~\ref{sec:method}, thus enabling our further experiments.
When comparing InjK with MLE and CVAE that also do not trained with paired knowledge, as presented in sub-figures~\ref{fig:curves}(1)(2), we observe that InjK outperforms MLE and CVAE along different extents of knowledge insufficiency. This demonstrates that InjK can learn how to more efficiently utilize knowledge even when no annotations are available for training.
When comparing InjK with KGuide and KGround that are trained with annotations, as seen in sub-figures~\ref{fig:curves}(3)(4), we observe that InjK outperforms KGuide and KGround when test with limited obtainable knowledge. This shows that KGuide and KGround can suffer from severe dataset shift when they are trained with annotations but test with limited amounts (we will further analyze this shift in Section~\ref{sec:gap-analysis}). This comparison also indicates that InjK can perform well regardless of knowledge insufficiency.

\begin{table}[t]\small
    \centering
    \begin{tabular}{lcccccc}\toprule[1pt]
        & \multicolumn{3}{c}{Coh} & \multicolumn{3}{c}{Info}\\
        & win & tie & $\kappa$ & win & tie & $\kappa$\\\midrule[0.5pt]
        InjK vs MLE & 46.0 & 17.1 & .59 & 38.6 & 23.6 & .52\\
        InjK vs CVAE & 45.4 & 16.1 & .49 & 34.0 & 27.3 & .53\\\bottomrule[1pt]
    \end{tabular}
    \caption{Human evaluation of coherence (Coh) and informativeness (Info) on Personachat without parallel profiles. Note that Info is designed to be independent from Coh by only considering responses but no inputs, so Coh and Info should be used together to access if a response gives needed information.}
    \label{tab:human-eval}
\end{table}

\subsection{Human Evaluation}
We conducted human evaluation mainly on Personachat and focused on models trained without knowledge.
We adopted softmax sampling with temperature set to $0.7$ to generate the responses~\cite{wolf2019transfertransfo} {\it with no} knowledge by each model and randomly selected $200$ history-response pairs for evaluation.
Next, we invited 3 native speakers with over $95\%$ approval rate on Amazon Mechanical Turk (MTurk)~\cite{buhrmester2016amazon} to judge the models' performance.
They were asked to compare two generated responses according to coherence (Coh) and informativeness (Info).
The instruction is given as, (a) {\bf Coh:} the sentence is coherent to the semantics of the conversation history, and is a proper response to the conversation history; (b) {\bf Info:} the sentence provides some information about the speakers, scenes, or background. The sentence should not be a general question nor be meaningless, such as ``I don't know''.
They were asked to choose which response is better on each criterion or the responses are tied.

The human evaluation results are listed in Table~\ref{tab:human-eval}, where the annotators' agreements $\kappa$ are measured by Fleiss' kappa~\cite{fleiss1971measuring}. 
According to an interpretation of Fleiss' kappa~\cite{landis1977measurement}, the annotators' agreements are moderate.
The results show that InjK has better coherence than both MLE and CVAE, but their contained information is about the same level when testing without paired knowledge. This may because that without the guidance of knowledge, MLE and CVAE generate some randomized semantics that annotators may take the semantics to be informative, but the responses can actually distract users from the conversation.

\begin{figure}[t]
    \centering
    \includegraphics[width=.98\linewidth]{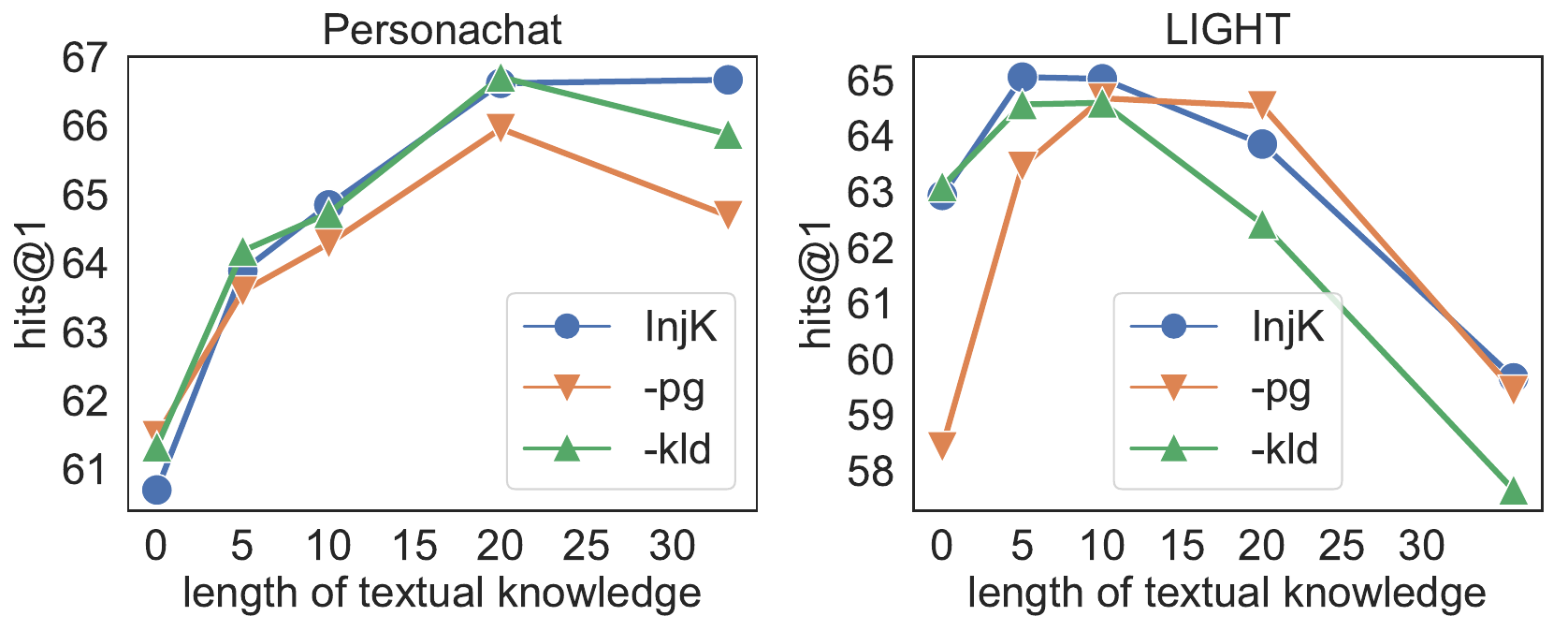}
    \caption{The ablation study of InjK. {\it -pg} and {\it -kld} are the results when the policy gradient and KLD terms drop off from equation~\ref{eq:pg-loss}.}
    \label{fig:ablation}
\end{figure}

\section{Analysis}
We study generated examples in subsection~\ref{sec:case-study}, the necessity of policy gradient and KLD terms in equation~\ref{eq:pg-loss} in~\ref{sec:ablation}, and our proposed knowledge gap in subsection~\ref{sec:gap-analysis}.

\subsection{Qualitative Analysis}\label{sec:case-study}
Some examples are presented in Table~\ref{tab:case-study}, where MLE and CVAE generate acceptable response but slightly deviate from the focus, KGuide and KGround generate diverse ngrams but less likely to maintain the flow, InjK shows its possibility to exchange precise information in conversations.

\subsection{Ablation Study}\label{sec:ablation}
We respectively drop the policy gradient (pg) and KLD terms off from equation~\ref{eq:pg-loss} in InjK and present the results in Figure~\ref{fig:ablation}.
As can be seen, the impacts of deleting pg and KLD terms vary on Personachat and LIGHT, but including both terms generally achieves the best performance.

\subsection{Knowledge Gaps Analysis}
\label{sec:gap-analysis}

To analyze the slope of curves in Figure~\ref{fig:curves}, we define {\bf knowledge gap} as the variance of PPL w.r.t. knowledge amounts to \emph{measure the importance of the presence of sufficient external knowledge to a dialogue model}.
Formally, we formulate the knowledge gap of a dialogue model as the standard deviation of PPL over tests with different lengths of textual knowledge.
Moreover, we define the average knowledge gap in training phase of a dataset as
\begin{equation}\label{eq:kg-gaps-in-train}
    \frac{1}{num(k)}\Sigma_k stdev_i(\{PPL(P_{\phi^{(i)}} (y|x,z^k))\}_i)
\end{equation}
where $\phi^{(i)}$ is the parameters of $i$-th model, $stdev_i$ means standard deviation over $i$, and $k$ denotes the length of textual knowledge. This training phase knowledge gap is the average value over $k$ of the PPL standard deviation across models and aims to quantify the importance of using knowledge for training.

We define the knowledge gap in testing phase of a dataset as
\begin{equation}\label{eq:kg-gaps-in-test}
    \frac{1}{num(i)}\Sigma_i stdev_k(\{PPL(P_{\phi^{(i)}} (y|x,z^k))\}_k)
\end{equation}
which means the average value over models of the PPL standard deviation across length of textual knowledge.
This value targets to quantify the importance of using knowledge in test time.


\begin{figure}[t!]
\centering
\includegraphics[width=.98\linewidth]{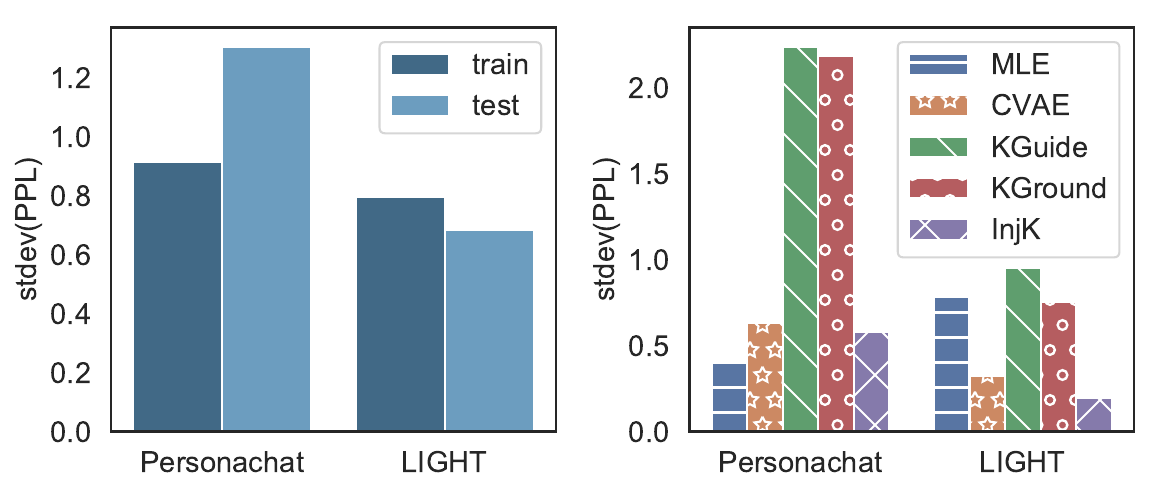}
\caption{Comparison of (1) the knowledge gaps in training and test phase, as well as (2) the knowledge gaps of different models, which are defined as equations~\ref{eq:kg-gaps-in-train} and \ref{eq:kg-gaps-in-test}, for each dataset.}
\label{fig:knowlegde_gaps}
\end{figure}

\paragraph{Knowledge gaps in training v.s test phases.}

As shown in Figure~\ref{fig:knowlegde_gaps}(1), the average knowledge gaps in test phase are comparable to the ones in training phase.
This indicates that test with sufficient knowledge is as influential as training, whereas prior work only focus on the problem in training time.

\paragraph{Knowledge gaps of different models.}
We draw the knowledge gaps of different models in Figure~\ref{fig:knowlegde_gaps}(2).
As can be seen, KGuide and KGround highly depend on the amounts of knowledge. This shows they fail to effectively utilize the dialogue history and unlikely generate knowledge as a cognitive process.
MLE, CVAE, and InjK, with narrower knowledge gaps, can optimize the usage of dialogue history.
Among them, InjK generally has the smallest knowledge gap and performs better in Figure~\ref{fig:curves}, showing that considering knowledge insufficiency in both training and test is more likely to embed cognition in a dialogue model.

\section{Conclusion}
We formulate the insufficient knowledge problem in a real conversation where speakers cannot access needed knowledge to respond. To deal with this problem, we propose InjK that injects knowledge into a dialogue model by regularizing a latent variable being knowledgeable.
Empirically, we analyze the impact of knowledge insufficiency in both training and testing phases.
The results show that InjK outperforms state-of-the-art baselines when limited knowledge is obtainable in applications.

\section{Ethical Considerations}
This work trains the models on crowd-source datasets from prior work, thus may resulting in some toxic knowledge in generated responses, such as bias and misinformation.


\bibliographystyle{acl_natbib}
\bibliography{main}

\clearpage

\appendix

\section{Derivation of KLD upper bound}

\begin{proof}\small
    \begin{equation}
        \begin{split}
            &   D_{KL}(P_\sigma(z|x)||P_\theta(z))\\
            &   = \Sigma_{z\in \mathbb{Z}}P_\sigma(z|x)\log\frac{P_\sigma(z|x)}{P_\theta(z)}\\
            &   = \Sigma_{z\in \mathbb{Z}}P_\sigma(z|x) [\log P_\sigma(z|x) - \log P_\theta(z)]\\
            &   = \underset{z\in \mathbb{Z}}{\Sigma}P_\sigma(z|x) [\underset{k}{\Sigma} \log P_\sigma(z_k|x,z_{1:k-1}) - \underset{k}{\Sigma} \log P_\theta(z_k|z_{1:k-1})]\\
            &   = \Sigma_{z\in \mathbb{Z}}\Sigma_k P_\sigma(z|x) \log \frac{P_\sigma(z_k|x,z_{1:k-1})}{P_\theta(z_k|z_{1:k-1})}\\
            &   \leq \Sigma_{z}\Sigma_{k} P_\sigma(z_k|x,z_{1:k-1})\log \frac{P_\sigma(z_k|x,z_{1:k-1})}{P_\theta(z_k|z_{1:k-1})}\\
            &   = \Sigma_{k} D_{KL}(P_\sigma(z_k|x,z_{1:k-1})||P_\theta(z_k|z_{1:k-1}))\\
        \end{split}
    \end{equation}
\end{proof}
The last inequality holds since $P_\sigma(z|x)=\prod_k P_\sigma(z_k|x,z_{1:k-1})$,
\begin{equation}
    P_\sigma(z|x) \leq P_\sigma(z_k|x,z_{1:k-1})\text{, for any $k$}
\end{equation}

\section{Derivation of InjK}
\label{sec:derivation}
Recall that in Method Section in the main content, we define notations $x$ for dialogue history, $y$ for response, and $z$ for the knowledge.
To clarify more, we define $\tilde{y}$ as the ground-truth response, $\hat{y}$ as the generated response, $\tilde{z}$ as the labeled knowledge, and $\hat{z}$ as the estimated knowledge. When using $y$ or $z$, we do not specify its status but represent it as the variable for description. Moreover, we denote the knowledge domain as $\mathbb{Z}$ and assume all $z\in\mathbb{Z}$.

Borrowing the concept of Causality~\cite{hlavavckova2007causality}, we can suppose $z$ is a confounding variable besides $x$ of $y$ since it contains extra information for $y$ that $x$ does not have.
However, when having no knowledge during training, $z$ is an unobserved variable that needs us to discover.
As the concept of Causality, variable $z$ is a cause to $y$ other than $x$ if and only if $P(y|x,z) > P(y|x)$. This means $z$ gives {\it extra, useful} information to infer $y$.
We can rewrite the inequality in a more realistic scenario as the following.
\begin{lemma}
    given $(x,\tilde{y},\tilde{z})\sim P_D$, where $P_D$ is the real data distribution, and $\tilde{z}$ contains information useful to $\tilde{y}$ and different from $x$.
    \begin{equation}
        \max_\phi P_\phi(\tilde{y}|x) < \max_\theta P_\theta(\tilde{y}|x,\tilde{z})
    \end{equation}
\end{lemma}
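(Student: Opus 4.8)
The plan is to read each $\max$ as the best attainable \emph{expected} log-likelihood under $P_D$ over a sufficiently expressive model family, and then reduce the inequality to a comparison of conditional entropies. First I would dispatch the non-strict direction: given any response model $P_\phi(\cdot\mid x)$, the rule $P_\theta(\cdot\mid x,\hat z):=P_\phi(\cdot\mid x)$ defines a member of the richer family that simply discards $\hat z$; hence the supremum over $\{P_\theta(\cdot\mid x,\hat z)\}$ is at least that over $\{P_\phi(\cdot\mid x)\}$, giving ``$\le$'' unconditionally.

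To pin down both sides exactly, I would use Gibbs' inequality (nonnegativity of the KL divergence): for any conditional model $Q$, $\mathrm{E}_{P_D}[\log Q(y\mid\cdot)]\le\mathrm{E}_{P_D}[\log P_D(y\mid\cdot)]$ with equality iff $Q=P_D$ ($P_D$-a.e.). Assuming $\{P_\phi\}$ and $\{P_\theta\}$ can represent (or approximate arbitrarily well) the relevant true conditionals, the two suprema equal $-H(Y\mid X)$ and $-H(Y\mid X,\hat Z)$ respectively, so the claim becomes $H(Y\mid X)>H(Y\mid X,\hat Z)$, i.e.\ $I(Y;\hat Z\mid X)>0$.

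This is where the hypothesis on $\hat z$ does the work. I would formalize ``$\hat z$ contains information useful to $y$ and different from $x$'' as $Y\not\perp\hat Z\mid X$ under $P_D$; by the standard characterization, conditional mutual information vanishes exactly under conditional independence, so the hypothesis yields $I(Y;\hat Z\mid X)>0$ and hence strict inequality. Chaining the three steps---family inclusion, Gibbs' inequality, and the identity $H(Y\mid X)-H(Y\mid X,\hat Z)=I(Y;\hat Z\mid X)$---closes the argument.

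The main obstacle is interpretive rather than computational: as literally written the inequality is pointwise in the triple $(x,y,\hat z)$, and for an individual triple a locally misleading $\hat z$ can make it fail. The genuine content holds only on average, so a rigorous proof must commit to the expected-log-likelihood reading of $\max$ and sharpen the informal ``usefulness'' assumption into a quantitative one (nonzero conditional mutual information, equivalently a strict gap $\mathrm{E}_{P_D}[D_{KL}(P_D(\cdot\mid x,\hat z)\,\|\,P_D(\cdot\mid x))]>0$) so that strictness is actually delivered; once that is granted the remaining steps are routine.
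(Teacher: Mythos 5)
Your proposal is correct under the interpretive choices you make explicit, but it is a genuinely different route from the paper's: the paper does not actually prove this lemma at all. It introduces the inequality $P(y\mid x,z) > P(y\mid x)$ as the \emph{defining} premise for $\hat z$ being a cause of $y$ (in the spirit of Granger causality and the causal-modeling conditions it cites), states the lemma as a ``rewriting'' of that premise for trained models, and justifies it empirically by noting that on the datasets used, models given $\hat z$ have been shown to perform better. Your argument, by contrast, derives the inequality: the non-strict direction from family inclusion (the $\theta$-family can ignore $\hat z$), the identification of both suprema with negative conditional entropies via Gibbs' inequality under a realizability assumption, and strictness from $I(Y;\hat Z\mid X)>0$, which is exactly the right formalization of the paper's informal hypothesis that $\hat z$ ``contains information useful to $y$ and different from $x$.'' What your approach buys is a precise statement of what must be assumed for the lemma to hold (expressive model families, nonzero conditional mutual information) and a correct diagnosis of a real defect in the paper's phrasing: as written the inequality is pointwise in $(x,y,\hat z)$, and it can fail for individual triples; it is only valid in expectation over $P_D$, i.e., as a statement about expected log-likelihoods. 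The paper silently elides this distinction (its surrounding text, e.g., the chain $p_\phi(y\mid x) < p_\phi(y\mid x,\tilde z) \le p_{\theta^*}(y\mid x,\hat z)$ in the proof of its Theorem, continues to use the pointwise reading), so your sharpened version is arguably the statement the authors need.
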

where $\phi$ is the parameters of the generative model that only takes $x$ as input, and $\theta$ is the ones that also takes $\tilde{z}$ as input.

We further define a residual function $\epsilon$ as follows to quantify the difference between $\max_\phi P_\phi(\tilde{y}|x)$ and $\max_\theta P_\theta(\tilde{y}|x,z)$. This function is continuous with respect to $x$ and $z$. We note that here we use $z$ but not $\tilde{z}$ because we will find out how to learn the $z$ to maximize the residual.
Also, in the following, we will keep using $y$ in replacement of $\tilde{y}$ to simplify the notations.
\begin{definition}\label{def:residual}
    Define $\epsilon(z)$, for $(x,y)\sim P_D$ and $z\in Z$, is the residual of $\log P_{\theta^*}(y|x,z) - \log P_{\phi^*}(y|x)$, where $\theta^*$ is the optimal parameters of $P(y|x,z)$ and $\phi^*$ is the optimal parameters of $P(y|x)$. The lower bound of $\epsilon(z)$ can be derived as follows.
    \begin{equation}
        \epsilon(z) \geq  \log \frac{P_{\phi^*}(x,y,z)}{P_D(x,y) P_D(z)}
    \end{equation}
\end{definition}
The derivation is as follows. Note that the prior distributions $P(x)$, $P(y)$, $P(z)$ are available from the dataset. Therefore we label them with subscript $D$ and we do not need to model these priors.
In addition, we assume that $P(x)$ and $P(z)$ are identically independent distributed, i.e., $P(x,z) = P(x)P(z)$, which holds in some real-world cases and also satisfies the definition in Granger causality~\cite{granger2004time}, which indicates that $z$ should contain some unique information that does not exist in other variables.

\begin{equation}
    \begin{split}
        \epsilon(z) & \coloneqq \log P_{\theta^*}(y|x,z) - \log P_{\phi^*}(y|x) \\
        & \geq \log P_{\phi^*}(y|x,z) - \log P_{\phi^*}(y|x) \\
        & = \log \frac{P_{\phi^*}(x,y,z)}{P_{\phi^*}(x,z)} \frac{P_{\phi^*}(x)}{P_{\phi^*}(x,y)}\\
        & \geq \log \frac{P_{\phi^*}(x,y,z)P_{D}(x)}{P_{D}(x)P_{D}(z)P_{D}(x,y)}\\
        & = \log \frac{P_{\phi}(x,y,z)}{P_D(x,y) P_D(z)} \\
    \end{split}
\end{equation}

To find out the latent $z$ that contributes most to the generation of $y$, our aim is to sample a $\hat{z}\sim p_\sigma(z|x)$ which makes $p_{\phi^*}(y|x,\hat{z})$ as close as possible to $p_{\theta^*}(y|x,\tilde{z})$, that is to maximize the residual $\epsilon(\hat{z})$.
\begin{theorem}
The target to find $\hat{z}$ that makes $\log P_{\phi}(y|x,\hat{z})$ as close as possible to the upper bound $\log P_{\theta^*}(y|x,\tilde{z})$ is formulated as:
    \begin{equation}
        \min_\sigma \log P_{\theta^*}(y|x,\tilde{z}) - \log P_{\phi}(y|x,\hat{z})
    \end{equation}
    The tighter lower bound is:
    \begin{equation}\label{eq:optimization}
        \arg\max_{\sigma} \log P_\phi(x,y,\hat{z}) \text{, subject to } \min_{\sigma} I(x;\hat{z})
    \end{equation}
    Note that we suppose a $\phi$ that is able to utilize $z$.
\end{theorem}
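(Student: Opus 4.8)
\emph{Proof plan.}
The first displayed line is essentially a restatement of the goal — minimizing the gap to the fixed ceiling $\log P_{\theta^*}(y|x,z)$ is, by definition, making $\log P_\phi(y|x,\tilde z)$ as close to it as possible — so the real content is establishing the constrained form in equation~\ref{eq:optimization}. The plan is to (i) turn the target into a maximization over $\sigma$, (ii) swap the intractable conditional likelihood for the residual lower bound of Definition~\ref{def:residual}, and (iii) split the resulting surrogate into the joint log-likelihood appearing in equation~\ref{eq:optimization} and a mutual-information penalty.

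For (i), since $\log P_{\theta^*}(y|x,z)$ does not depend on $\sigma$, I would note that the target is equivalent to $\max_\sigma \log P_\phi(y|x,\tilde z)$ with $\tilde z\sim P_\sigma(z\mid x)$. The hypothesis that $\phi$ ``is able to utilize $z$'' then lets me identify $\log P_\phi(y|x,\tilde z)$, up to the $\sigma$-free constant $\log P_{\phi^*}(y|x)$, with the residual $\epsilon(\tilde z)$ of Definition~\ref{def:residual}; maximizing the conditional likelihood of $y$ is thus the same as driving $\epsilon(\tilde z)$ toward its upper value $\epsilon(z)$.

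For (ii), because $\tilde z$ sits inside the conditioning set, $\log P_\phi(y|x,\tilde z)$ is awkward to optimize directly, so I would relax the objective of (i) to the bound $\epsilon(\tilde z)\ge\log\frac{P_\phi(x,y,\tilde z)}{P_D(x,y)P_D(\tilde z)}$ from Definition~\ref{def:residual}. Maximizing this surrogate, and discarding the $\sigma$-free factor $P_D(x,y)$, gives $\max_\sigma \mathrm{E}_{(x,y)\sim P_D,\ \tilde z\sim P_\sigma(\cdot\mid x)}\!\big[\log P_\phi(x,y,\tilde z)-\log P_D(\tilde z)\big]$. For (iii), I would then reorganize the marginal term $\mathrm{E}[-\log P_D(\tilde z)]$ using the assumed factorization $P(x,z)=P(x)P(z)$ (the Granger-causality condition already used in Definition~\ref{def:residual}) into the mutual information $I(x;z)$ plus a conditional-entropy term; equivalently it is controlled by the variational quantity $\mathrm{E}_x\,D_{KL}\!\big(P_\sigma(z\mid x)\,\|\,P_Z(z)\big)\ge I(x;z)$ built from the fixed LM prior $P_Z$. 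The surrogate then separates into a piece to be maximized, $\log P_\phi(x,y,\tilde z)$, and a coupling penalty $I(x;z)$ to be minimized; writing the latter as a constraint yields equation~\ref{eq:optimization}, and replacing it by the tractable KLD recovers the loss of equation~\ref{eq:loss}.

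The main obstacle I expect is justifying the word ``tighter'': one must argue that the constrained reformulation does not merely lower-bound the residual but stays close to it, which amounts to controlling two slacks — the gap from replacing the optimal $\theta^*$ by the working $\phi$ (absorbed by the ``$\phi$ utilizes $z$'' hypothesis) and the slack in the Definition~\ref{def:residual} inequality, which is itself of mutual-information type and is precisely what subtracting the $I(x;z)$ penalty is meant to cancel. A secondary, more structural difficulty is that $\tilde z$ is simultaneously the quantity optimized (through $\sigma$) and an argument of the densities involved, so the clean ``constant plus objective'' split only becomes available after passing to the Definition~\ref{def:residual} surrogate — which is why that bound, rather than the raw target, is the right estimator to start from.
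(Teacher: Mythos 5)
Your overall route matches the paper's: the paper likewise observes that minimizing the gap to the $\sigma$-free ceiling $\log P_{\theta^*}(y|x,z)$ reduces to $\arg\max_\sigma \epsilon(\tilde z)$, passes to the Definition~\ref{def:residual} lower bound because the direct objective cannot be differentiated through the sampling $\tilde z\sim P_\sigma(z|x)$, and arrives at $\arg\max_\sigma\log P_\phi(x,y,\tilde z)$ subject to $P_\phi(x,\tilde z)=P_D(x)P_D(\tilde z)$, which it then identifies with $\min_\sigma I(X;Z)$ via $I(X;Z)=E_{p(x)}D_{KL}\bigl(p(z|x)\,\|\,p(z)\bigr)$.

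One step of your plan is off, though. In (iii) you propose to obtain the mutual-information penalty by decomposing the leftover marginal term $\mathrm{E}[-\log P_D(\tilde z)]$ into $I(x;z)$ plus a conditional entropy. That decomposition carries the wrong sign: $\mathrm{E}[-\log p(\tilde z)] = I(X;\tilde Z)+H(\tilde Z\mid X)$ would enter the surrogate with a \emph{plus} sign inside a maximization, so it would reward, not penalize, dependence between $x$ and $\tilde z$. In the paper the MI term does not come from that denominator at all --- the data priors $P_D(x,y)$ and $P_D(z)$ are treated as fixed quantities and dropped; it comes from carrying forward, as an explicit constraint on $\sigma$, the independence assumption $P(x,z)=P(x)P(z)$ that was needed to make the Definition~\ref{def:residual} bound valid in the first place, and then rewriting that constraint as $\min I(X;Z)$. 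Your parenthetical appeal to the factorization and to $E_x D_{KL}\bigl(P_\sigma(z|x)\,\|\,P_Z(z)\bigr)\ge I(x;z)$ is the right instinct; make it the source of the constraint rather than a reorganization of $-\log P_D(\tilde z)$. Separately, the paper's proof also records the REINFORCE-style estimator $\nabla_\sigma P_\phi(x,y,\tilde z)\approx\frac{1}{m}\sum_i p_\phi(y^i|x^i,\tilde z^i)\nabla_\sigma\log p_\sigma(\tilde z^i|x^i)$, which is what actually makes the ``tighter lower bound'' optimizable; your plan stops short of that, though this is arguably an implementation detail rather than part of the theorem's content.
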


\begin{proof}
    because $\tilde{z}$ is the optimal $z$ and $\hat{z}$ is an arbitrary $z$, the inequalities hold
    \begin{equation}
        p_{\phi}(y|x) < p_{\phi}(y|x,\hat{z}) \leq p_{\theta^*}(y|x,\tilde{z})
    \end{equation}
    
    Therefore, to optimize the model with respect to $\hat{z}$ is the same as to minimize the difference between residuals $\epsilon(\hat{z})$ and $\epsilon(\tilde{z})$, since $\hat{z}$ is based on $\phi$ and does not change the lower bound of $\epsilon$ in Definition~\ref{def:residual}.
    
    \begin{equation}
        \begin{split}
            & \arg\min_\sigma \log P_{\theta^*}(y|x,\tilde{z}) - \log P_{\phi}(y|x,\hat{z})\\
            & = \arg\min_{\sigma} \epsilon(\tilde{z}) - \epsilon(\hat{z})\\
            & = \arg\max_\sigma \epsilon(\hat{z})\\
        \end{split}
    \end{equation}
    We are not able to compute the gradient of $\sigma$ of maximizing $\epsilon(\hat{z})$ since the process $\hat{z}\sim P_\sigma(z|x)$ is not differentiable. Therefore, we optimize its lower bound, at least to give the $\epsilon(\hat{z})$ a tighter constraint.
    \begin{equation}
        \begin{split}
            & \arg\max_\sigma \log P_\phi(x,y,\hat{z})\\
            & \text{subject to } P_\phi(x,\hat{z}) = P_D(x)P_D(\hat{z})
        \end{split}
    \end{equation}
    Because logarithm an increasing function, we choose to optimize its variable:
    \begin{equation}\small
        \nabla_\sigma P_\phi(x,y,\hat{z}) \approx \frac{1}{m} \sum_{i=1}^m p_\phi(y^i|x^i,\hat{z}^i)\nabla_\sigma \log p_\sigma(\hat{z}^i|x^i)
    \end{equation}
    where $m$ is the batch size.
    
    The constraint $P_\phi(x,z) = P_D(x)P_D(z)$ can be rewritten as minimizing the mutual information of x and z variables $I(X;Z) = E_{p(x,z)} \log \frac{p(x,z)}{p(x)p(z)}=E_{p(x)}D_{KL}(p(z|x)||p(z))$.
\end{proof}

\section{Computational Costs}
\begin{table}[h]\small
    \centering
    \begin{tabular}{lcc}\toprule[1pt]
        \bf Method & Training (hrs)\\\midrule[0.5pt]
        KGround & $\sim$6\\
        MLE & $\sim$5\\
        CVAE & $\sim$8\\
        InjK & $\sim$13\\\bottomrule[1pt]
    \end{tabular}
    \caption{The required training times of the methods on Personachat.}
    \label{tab:training-time}
\end{table}

Since CVAE samples one more token and InjK samples a sequence of tokens for computing KL divergence, CVAE consumes higher computational costs than MLE and InjK requires even more. To reduce this training cost is a worth discussing future work.

\section{Validation Results}
\begin{table}[h]\small
    \centering
    \begin{tabular}{lcc}\toprule[1pt]
        \bf Method & Hits@1 & PPL \\\midrule[0.5pt]
        \multicolumn{3}{l}{\it (Persona-Chat, w/ paired knowledge)}\\
        KGround$^*$ & 78.96 & 15.34\\
        MLE & 63.81 & 18.88\\
        CVAE & 65.15 & 18.20\\
        InjK & 68.12 & 17.94\\\midrule[0.5pt]
        \multicolumn{3}{l}{\it (LIGHT, w/ paired knowledge)}\\
        KGround$^*$ & 69.31 & 22.05\\
        MLE & 54.16 & 23.67\\
        CVAE & 54.91 & 23.39\\
        InjK & 60.25 & 22.92\\\midrule[0.5pt]
    \end{tabular}
    \caption{The results on validation set.}
    \label{tab:valid}
\end{table}

\end{document}